\definecolor{bonus_green}{RGB}{0,100,0}
\definecolor{drop_red}{RGB}{180,0,0}
\definecolor{bg}{RGB}{176,226,255}
\newcommand{\bbonus}[1]{{\textcolor{bonus_green}{$^{\uparrow#1}$}}}
\newcommand{\ddrop}[1]{{\textcolor{drop_red}{$^{\downarrow#1}$}}}
\newtheorem{theorem}{Theorem}
\title{On Designing Effective RL Reward at Training Time for LLM Reasoning}
\author{Jiaxuan Gao$^{1,2,*}$
	\quad
	Shusheng Xu$^{1,2,*}$ \quad Wenjie Ye$^3$ \quad Weilin Liu$^3$ \quad Chuyi He$^3$\\
\textbf{Wei Fu$^{1,2}$ \quad Zhiyu Mei$^{1,2}$ \quad Guangju Wang$^3$ \quad Yi Wu$^{1,2,3,\dag}$
} \\
$^1$ Institute for Interdisciplinary Information Sciences, Tsinghua University \\
$^2$ Shanghai Qi Zhi Institute \quad $^3$OpenPsi Inc.\\
\texttt{\{samjia2000, xssstory, jxwuyi\}@gmail.com} \\
}
\definecolor{bonus}{rgb}{0.0, 0.8, 0.0}
\begin{document}
\renewcommand{\thefootnote}{\fnsymbol{footnote}}
\footnotetext[1]{Equal Contribution}
\footnotetext[2]{Corresponding Author}
\renewcommand{\thefootnote}{\arabic{footnote}}

\maketitle

\begin{abstract}
Reward models have been increasingly critical for improving the reasoning capability of LLMs. Existing research has shown that a well-trained reward model can substantially improve model performances \emph{at inference time} via search or best-of-N votes. 
However, the potential of reward models during \emph{RL training time} still remains largely under-explored. 
It is currently unclear whether these reward models can provide additional training signals to RL training that uses sparse success rewards, which verify the correctness of solutions.
In this work, we evaluate popular reward models for RL training, including the Outcome-supervised Reward Model (ORM) and the Process-supervised Reward Model (PRM), and train a collection of LLMs for math problems using RL by combining these learned rewards with success rewards. Surprisingly, even though these learned reward models have strong inference-time performances, they may \emph{NOT} help or even hurt RL \emph{training}, producing worse performances than LLMs trained with the success reward only. 
Our analysis reveals that an LLM can receive high rewards from some of these reward models by repeating correct but unnecessary reasoning steps, leading to a severe reward hacking issue for RL training. 
Therefore, we introduce two novel reward refinement techniques, including \textbf{\emph{Clipping}} and \textbf{\emph{Delta}}. The key idea is to ensure the accumulative reward of any reasoning trajectory is upper-bounded to keep a learned reward model effective without being exploited. 
We evaluate our techniques with multiple reward models over a set of 1.5B and 7B LLMs on MATH and GSM8K benchmarks, where both \textbf{\emph{Clipping}} and \textbf{\emph{Delta}} consistently stabilize RL training. 
Finally, we also demonstrate that with a carefully designed reward function, pure RL training without any additional supervised tuning can further improve all the evaluated LLMs, including the state-of-the-art 7B LLM Qwen2.5-Math-7B-Instruct on MATH and GSM8K benchmarks.
\end{abstract}

\section{Introduction}

There is a recent trend to improve the reasoning ability of LLMs with learned reward models~\citep{lightman2023let,wang2024math,yu2024ovm,zhang2024generative,lee2024token,yang2024qwen2,luo2024improve,chen2024improving,havrilla2024teaching,shao2024deepseekmath,uesato2022solving}. 
Recent research has been focusing on guiding search processes during inference~\citep{lightman2023let,snell2024scaling,wang2024math}, with two main categories of reward models: Outcome-supervised Reward Model (ORM)~\citep{cobbe2021training, yu2024ovm} and Process-supervised Reward Model (PRM)~\citep{lightman2023let,wang2024math,luo2024improve}.
ORM generates \emph{outcome rewards} that estimate the \emph{success rewards}, which evaluate the correctness of generated answers, enabling the selection of the most reliable answer from a pool of generated candidates.
By contrast, PRM is trained to distinguish correct reasoning steps from incorrect ones and can provide step-level \emph{process rewards} for 
search algorithms like Monte-Carlo Tree Search ~\citep{chen2024alphamath} and beam search~\citep{snell2024scaling}.

However, the potential of reward models in RL training for LLM reasoning is not yet fully explored. The most straightforward method for RL training in reasoning tasks is to optimize the success rewards. Some prior works further try the integration of a reward model into RL training
~\citep{havrilla2024teaching, wang2024math, shao2024deepseekmath}. \cite{havrilla2024teaching} finds that PPO training with a reward model only results in performance degeneration.
In addition, some powerful LLMs that exhibit strong reasoning abilities such as the Qwen2.5-Math family~\citep{yang2024qwen2} and DeepseekMath-7B-RL~\citep{shao2024deepseekmath} adopt RL training with reward models as a part of their overall training process for mathematical reasoning.
However, due to a lack of detailed analysis on the reward models, it remains unclear whether the reward models can provide additional training signals beyond what the success rewards offer for LLM reasoning.

In this work, we evaluate popular reward models, including ORM and PRM, as RL rewards on the challenging mathematical reasoning benchmark MATH~\citep{MATH} and GSM8K~\citep{GSM8K} by using PPO as the RL algorithm~\citep{schulman2017proximal}. Surprisingly, we find that these reward models may not enhance RL training or even lead to performance degradation, yielding even worse results than LLMs trained with a sparse success reward only. We observe that outcome rewards consistently achieve similar training results as success rewards. We hypothesize that outcome rewards may not be beneficial at \emph{training} time since a more accurate success reward is accessible.  For PRM, we perform an in-depth analysis of the RL training process and identify a severe \emph{reward hacking issue}~\citep{casper2023open,rame2024warm,singhal2023long}.
Reward hacking manifests in the form of generating numerous correct but unnecessary reasoning steps.
Through RL training, an LLM could exploit the 
PRM to achieve an excessively high by repeated generating simple reasoning steps that may not contribute to solving the problem, leading to a completely undesirable LLM behavior with poor reasoning accuracy. 

To tackle these challenges, we propose two novel techniques, i.e., \textbf{\emph{Clip}} and \textbf{\emph{Delta}}, 
which refines the process rewards for effective RL training.
In particular, the \textbf{\emph{Clip}} mechanism bounds rewards to an upper threshold so that RL training can focus on reducing erroneous reasoning steps.
The \textbf{\emph{Delta}} mechanism maintains a bounded objective by subtracting the rewards between two adjacent steps, discouraging trivial repetition patterns to achieve a high return and improving training stability.
Evaluation of these two techniques on synthetic reasoning trajectories demonstrates that they can mitigate the reward hacking issue consistently.
Finally, we conduct full RL training on a set of advanced 1.5B and 7B LLMs from the Qwen2 and Qwen2.5 families~\citep{qwen2,yang2024qwen2} with different reward models. Our experiment results show that our proposed techniques effectively stabilize RL training. Moreover, with a carefully crafted reward, RL training can 
improve all the evaluated LLMs, including the state-of-the-art 7B LLM Qwen2.5-Math-7B-Instruct on the challenging MATH and GSM8K~\citep{MATH,GSM8K} benchmarks.

\section{Related Work}

\textbf{Reinforcement Learning for LLMs.} 
In RLHF, Reinforcement learning algorithms can effectively fine-tune LLMs to align with the preference of humans~\citep{dong2023raft, rafailov2024direct, ouyang2022training, xu2024dpo,schulman2017proximal}, to improve the reasoning ability~\citep{shao2024deepseekmath,yang2024qwen2} and coding skills~\citep{wang2023mathcoder,guo2024deepseek}. PPO is the most widely used among the popular RL algorithms due to its robust performance across various domains~\citep{ouyang2022training, xu2024dpo}. \cite{xu2024dpo} investigates the implementation details of PPO for dialogue tasks and coding tasks, revealing batch size as a critical factor for improving PPO performance in reinforcement learning from human feedback (RLHF). Our work addresses the challenge of designing RL rewards for LLM reasoning.

\textbf{Reward Learning for LLMs.} Learned reward models are widely adopted in RLHF to align LLMs with human preferences~\citep{dong2023raft, rafailov2024direct, ouyang2022training}. In RLHF, reward models are trained on binary preference datasets collected from human annotators, following the Bradley-Terry model~\citep{bradley1952rank}. In reasoning tasks involving reliable solution checkers, two main approaches are the Outcome-supervised Reward Model (ORM)~\citep{cobbe2021training, yu2024ovm} and the Process-supervised Reward Model (PRM)~\citep{lightman2023let,wang2024math,luo2024improve}. An ORM predicts the likelihood that the final answer of a solution prefix would be correct. A PRM estimates whether the steps so far are correct for each reasoning step. Through training over extensive corpora, reward models are able to evaluate solution quality. Despite the successful applications of reward models, \emph{reward hacking} is a broadly observed issue in learned reward models ~\citep{skalse2022defining,singhal2023long,casper2023open}. Through RL training, the LLM may learn to generate high-reward outputs that could not fulfill the intended objectives. Several approaches have been proposed to tackle the reward hacking issue, including disentangling the length aspect of reward modeling~\citep{chen2024odin, shen2023loose}, reward ensemble~\citep{eisenstein2023helping,rame2024warm}, length penalty~\citep{singhal2023long}, length normalization~\citep{meng2024simpo}, and various PPO implementation tricks~\citep{singhal2023long,zheng2023secrets}. In this work, we investigate the reward hacking issue for reasoning tasks when combining learned rewards and success rewards in RL training.

\textbf{Improving Reasoning Ability of LLMs.} 
To improve the reasoning ability of LLMs, prior works have focused on several different aspects, including pre-training~\citep{yang2024qwen2, achiam2023gpt, anil2023palm}, prompting~\citep{toh2023veritymath, yuan2024advancing, wu2024large}, search during inference-time~\citep{lightman2023let,wang2024math,yu2024ovm,zhang2024generative,yang2024qwen2,luo2024improve,chen2024improving}, and fine-tuning~\citep{wang2024math,shao2024deepseekmath,yang2024qwen2,shah2024ai,tang2024mathscale,yu2023metamath}. Pre-training methods focus on enriching the data distribution to cover a large amount of rationals and pre-training the LLM over the dataset. The prompting methods elicit the reasoning ability of LLMs through dedicated prompting strategies and automatic agent frameworks. Inference-time search utilizes learned reward models to guide the selection of promising solutions. PRM and ORM could be combined with different search strategies such as Best-of-N, Monte-Carlo Tree Search~\citep{chen2024alphamath}, and Beam Search~\citep{snell2024scaling}. Finally, fine-tuning methods include training the LLM on high-quality question-answer data~\citep{yu2023metamath,shah2024ai,yue2023mammoth} and optimizing the reasoning ability with reinforcement learning~\citep{yang2024qwen2,shao2024deepseekmath,wang2024math}. In this work, we study how to effectively combine dense and sparse rewards in  RL training for reasoning tasks.

\section{Preliminary}
\label{sec:prelim}


\textbf{Language Model.} 
An LLM is represented as a policy $\pi_\theta(s|q)$ parameterized by $\theta$.
In reasoning tasks, $\pi_\theta$ generates a solution $s$ given a question $q$. In addition to the question, $q$ usually also contains a prompt to elicit chain-of-thought reasoning. The solution $s$ is structured with a list of reasoning steps and thus can be viewed from two perspectives, including tokens and steps. From the perspective of tokens, $s$ consists of $T$ tokens, $s=(s_1,s_2,\cdots,s_T)$. From the perspective of steps, $s$ consists of $K$ reasoning steps, $s=(s^{(1)},s^{(2)},\cdots,s^{(K)})$ where $s^{(k)}$ denotes the $k$-th reasoning step. For convenience, we use $p^{(k)}=(s^{(1)},s^{(2)},\cdots,s^{(k)})$ to denote the solution prefix up to the $k$-th step. In practice, reasoning steps can be parsed with rule-based detectors, enforcing strict output formats, or special tokens~\citep{chen2024alphamath,wang2024math,lightman2023let}.

\textbf{Reward Modeling.}
In RLHF, the reward models are usually trained with binary preferences~\citep{bradley1952rank}. In reasoning tasks where the correctness of solutions is accessible, reward models can be trained under the supervision of such ground-truth correctness.
In reasoning tasks, two primary methods for reward modeling are the Process-supervised Reward Model (PRM) and the Outcome-supervised Reward Model(ORM). 

Given a question $q$ and a prefix $s_{1:t}$, an ORM estimates the likelihood the prefix would lead to a correct answer. A standard approach to train an ORM is by first sampling solutions for questions from a dataset with an LLM and then labeling the correctness of each solution. The ORM $r_{\text{outcome}}$ is then trained with the following objective,
\begin{align*}
\mathcal L_{\text{ORM}}=\mathbb E_{q, s\sim \mathcal D}\left[\sum_{t=1}^{T}\text{Loss}(\text{Correct}(q, s), r_{\text{outcome}}(q,s_{1:t}))\right]
\end{align*}

where $\text{Correct}(q, s)$ is a binary value indicating the correctness of solution $s$, $t$ enumerates each token of the solution $s$, and $\text{Loss}$ denotes the loss function. In practice, the loss function could be binary cross-entropy loss or square-error loss, and we can choose to train ORM on the full sequence or only the last token.

In contrast, Process-supervised Reward Model (PRM) estimates the correctness of individual reasoning steps. PRM is trained with the following objective,
\begin{align*}
\mathcal L_{\text{PRM}}=\mathbb E_{q, p^{(k)}, y_k\sim \mathcal D}\left[\text{Loss}(y_k, r_{\text{process}}(q,p^{(k)}))\right]
\end{align*}
where $y_k$ is the label for the partial solution $p^{(k)}$ and $\text{Loss}$ is the loss function. In practice, binary cross entropy loss is usually adopted. Prior works have investigated several ways to annotate the process labels, including human annotators~\citep{lightman2023let} and automatic annotation with LLMs~\citep{wang2024math,luo2024improve}. 

\textbf{Reinforcement Learning for LLM 
 Reasoning.} 
 We assume access to the correctness of a solution during training. We use $\text{Correct}(q, s)$ to indicate the correctness of solution $s$ to question $q$, which is also referred to as the \emph{success reward} for RL training. An LLM can be fine-tuned to optimize the success reward by using Reinforcement Learning with Kullback-Leibler divergence, 
\begin{align}
\label{eq:rl}
J_r(\pi_\theta)=\mathbb E_{q\sim \mathcal D, s\sim\pi_\theta}\left[\text{Correct}(q,s) - \beta\log\frac{\pi_\theta(s|q)}{\pi_{ref}(s|q)}\right]
\end{align}
where $\pi_{ref}$ is the reference model for regularizing $\pi_\theta$.
Optimizing the success reward only provides a sparse training signal because the reward is provided at the end of the sequence. Alternatively, we can also combine dense rewards with the success reward for more fine-grained training signals. The RL objective with dense rewards becomes,
\begin{align}
\label{eq:rl-dense}
J_r(\pi_\theta)=\mathbb E_{q\sim \mathcal D, s\sim\pi_\theta}\left[\alpha\cdot\sum_{t=1}^{|s|}r(q,s_{1:t}) + \text{Correct}(q,s) - \beta\log\frac{\pi_\theta(s|q)}{\pi_{ref}(s|q)}\right]
\end{align}
where $r$ denotes the dense reward and $\alpha$ is a coefficient for the dense reward. For example, a PRM $r_{\text{process}}$ can provide dense feedback at the end of reasoning steps, formally represented as $r(q,p^{(k)})=r_{\text{process}}(q,p^{(k)})$ for any partial solution $p^{(k)}$.

\section{RL Reward for LLM Reasoning}

In this section, we conduct a systematic study on reward design to aid LLM in learning better reasoning skills through RL training. We follow the RL objective with dense rewards in Eq.~(\ref{eq:rl-dense}) and specifically focus on the effective design of dense rewards.
As discussed in Sec.~\ref{sec:prelim}, the ground-truth correctness, $\text{Correct}(p, s)$, serves to provide the sparse rewards, and the dense rewards could be provided by a reward model.

\subsection{Evaluating RL Training with Learned Reward Models}
\label{sec:common}

We first consider two straightforward approaches to apply ORM and PRM to provide rewards in addition to success rewards for RL training. Formally, we consider the following rewards,
\begin{itemize}
\item \textbf{Solution-Level Outcome Reward (OR):} In the RL training process of \cite{yang2024qwen2}, an ORM provides an estimation of correctness as reward shaping. Note that this is not the case for dense rewards since ORM only produces rewards at the end of the sequence. For a question $q$ and a solution $s$,
\begin{align}
\label{eq:or}
r(q,s)=r_{\text{outcome}}(q,s)
\end{align} 
\item \textbf{Step-Level Process Reward (PR):}  A PRM can provide step-level feedback for RL training. For any solution prefix $p^{(k)}$, dense rewards are the rewards outputted by a PRM,
\begin{align}
\label{eq:pr}
r(q,p^{(k)})=r_{\text{process}}(q, p^{(k)})
\end{align}
\end{itemize}

\label{sec:reward-hacking}

\paragraph{Experiment Setup.} 
We carry out our study on the challenging mathematical reasoning benchmark, MATH~\citep{MATH}. We use PPO as the RL algorithm and Qwen2-1.5B-Instruct~\citep{qwen2} as the base model. For ORM, we sample solutions with the base model and train ORM with binary cross-entropy loss. For PRM, we follow \cite{wang2024math} to generate process labels with automatic annotation\footnote{Implementation details can be found in Sec.~\ref{sec:exp}}. The ORM and PRM both use Qwen2-1.5B-Instruct as the base model.

\begin{figure}[h]
    \centering
    \subfigure[Greedy Accuracy]{
        \label{fig:or-pr}
        \includegraphics[height=35mm]{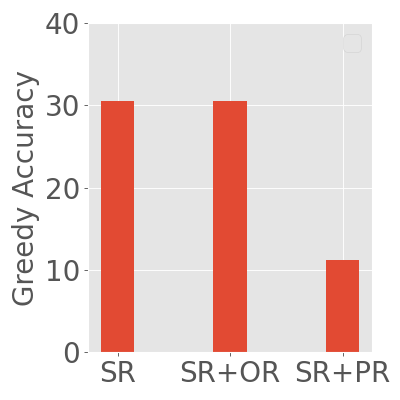}
    }
    \subfigure[Avg. Number of Tokens]{
        \label{fig:length-of-orm-prm}
        \includegraphics[height=35mm]{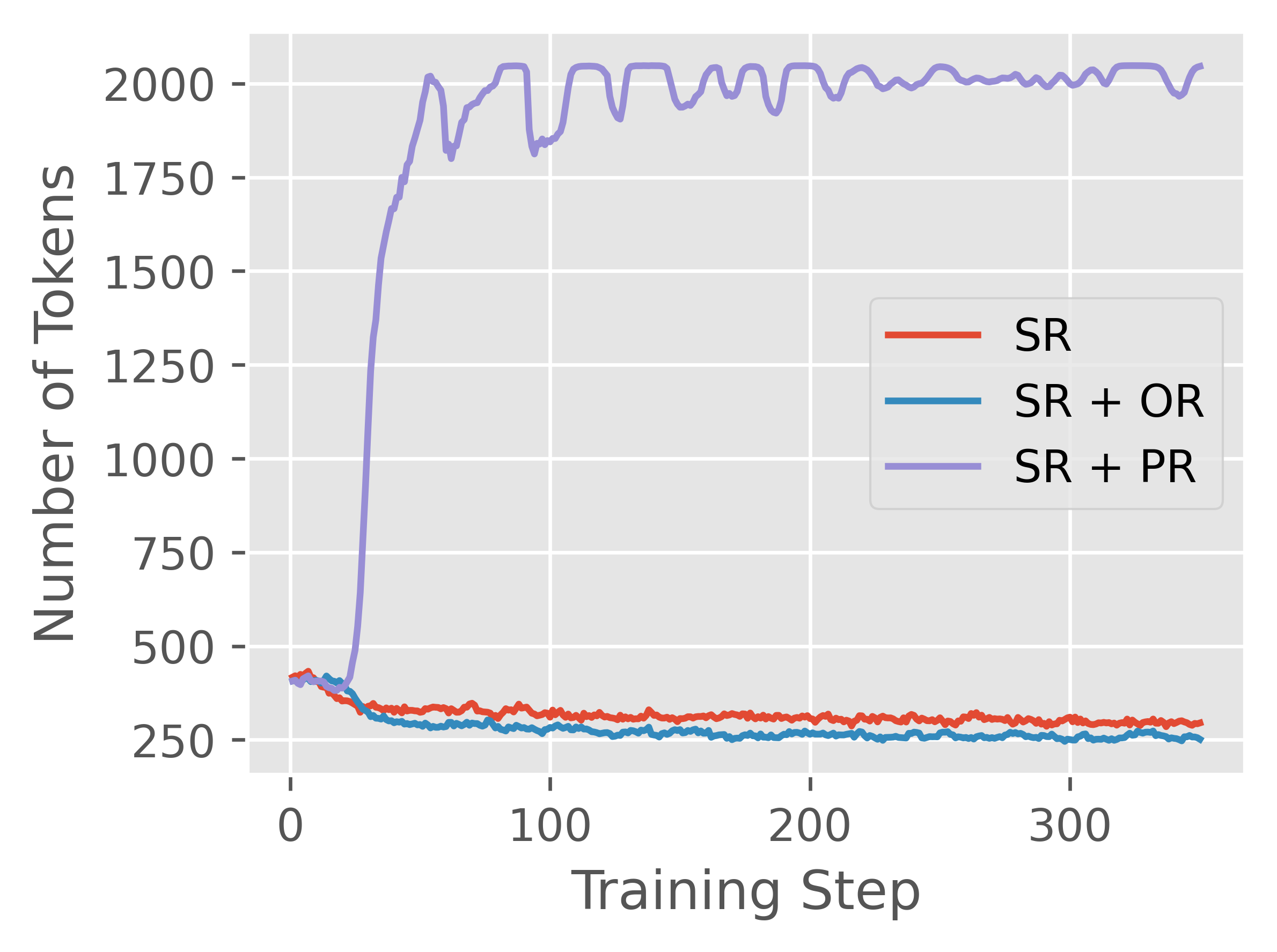}
    }
    \subfigure[Avg. Number of Steps
    ]{
        \label{fig:step-of-orm-prm}
        \includegraphics[height=35mm]{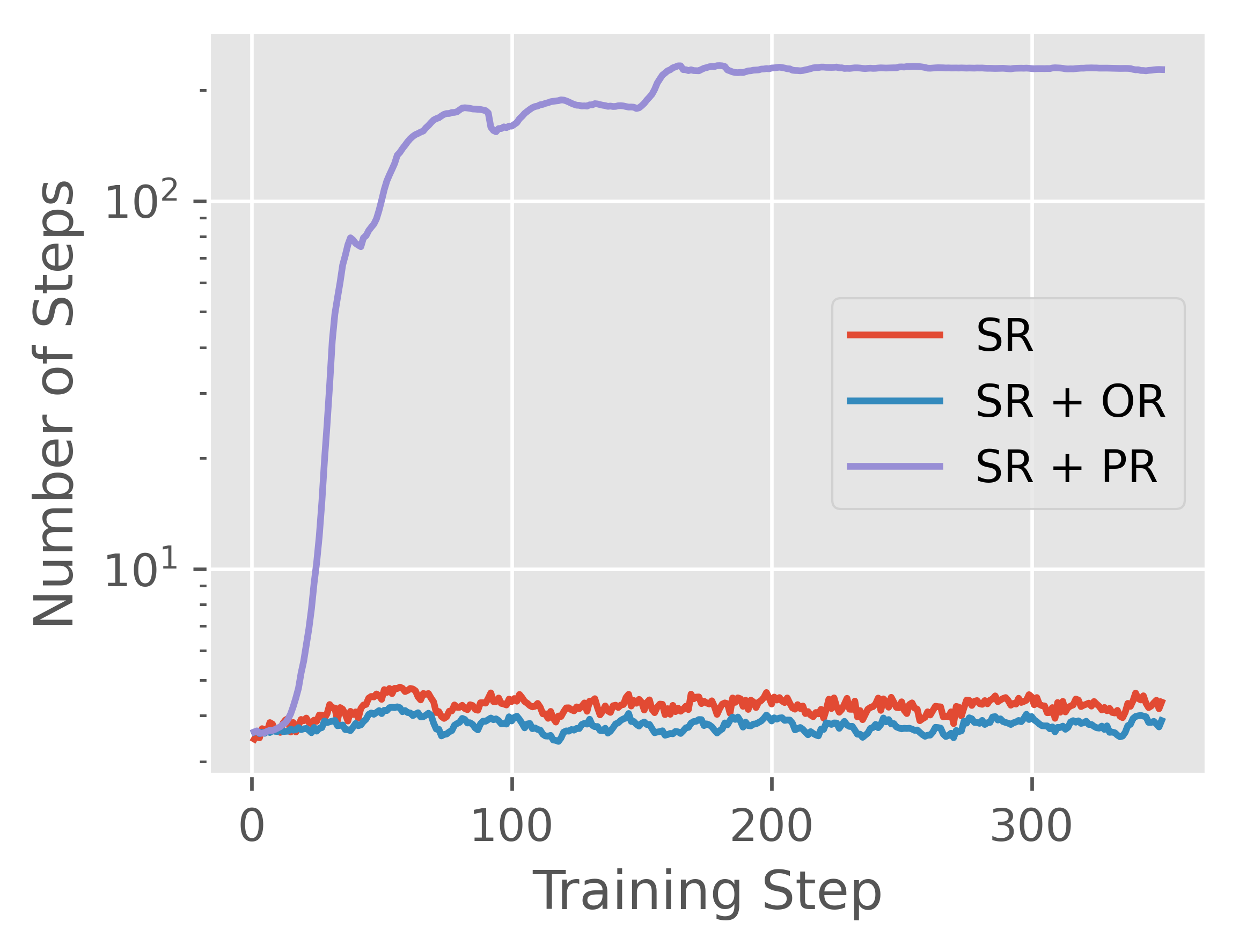}
    }
    
    \caption{(a) Greedy decoding accuracy of training with a combination of success reward and OR/PR on Qwen2-1.5B-Instruct. SR denotes the success reward. None of OR/PR can surpass training with success rewards. (b) Generation length during RL training. (c) Step count during RL training.}

\end{figure}

\paragraph{Results.} Surprisingly, we find these reward functions may not benefit RL training, yielding even worse inference-time performances than LLMs trained with a sparse success reward only, as shown in Fig.~\ref{fig:or-pr}. 
To further investigate the cause of performance degradation, 
Fig.~\ref{fig:length-of-orm-prm} reports the change in the generation length and the number of reasoning steps during training. 
Combining an outcome reward and a success reward shows similar training statistics and evaluation accuracy to adopting a sparse success reward only. We hypothesize this is because a success reward is accessible during training time, and an outcome reward may not be able to provide additional information beyond the success reward. 
On the other hand, when using PRM for RL training, we observe a significant change in the generation length and the number of reasoning steps during RL training. 
Specifically, the generation length and the step count of PR both significantly increase.

\begin{figure}
\centering    
        \label{fig:case-study-pr}
        \includegraphics[height=60mm]{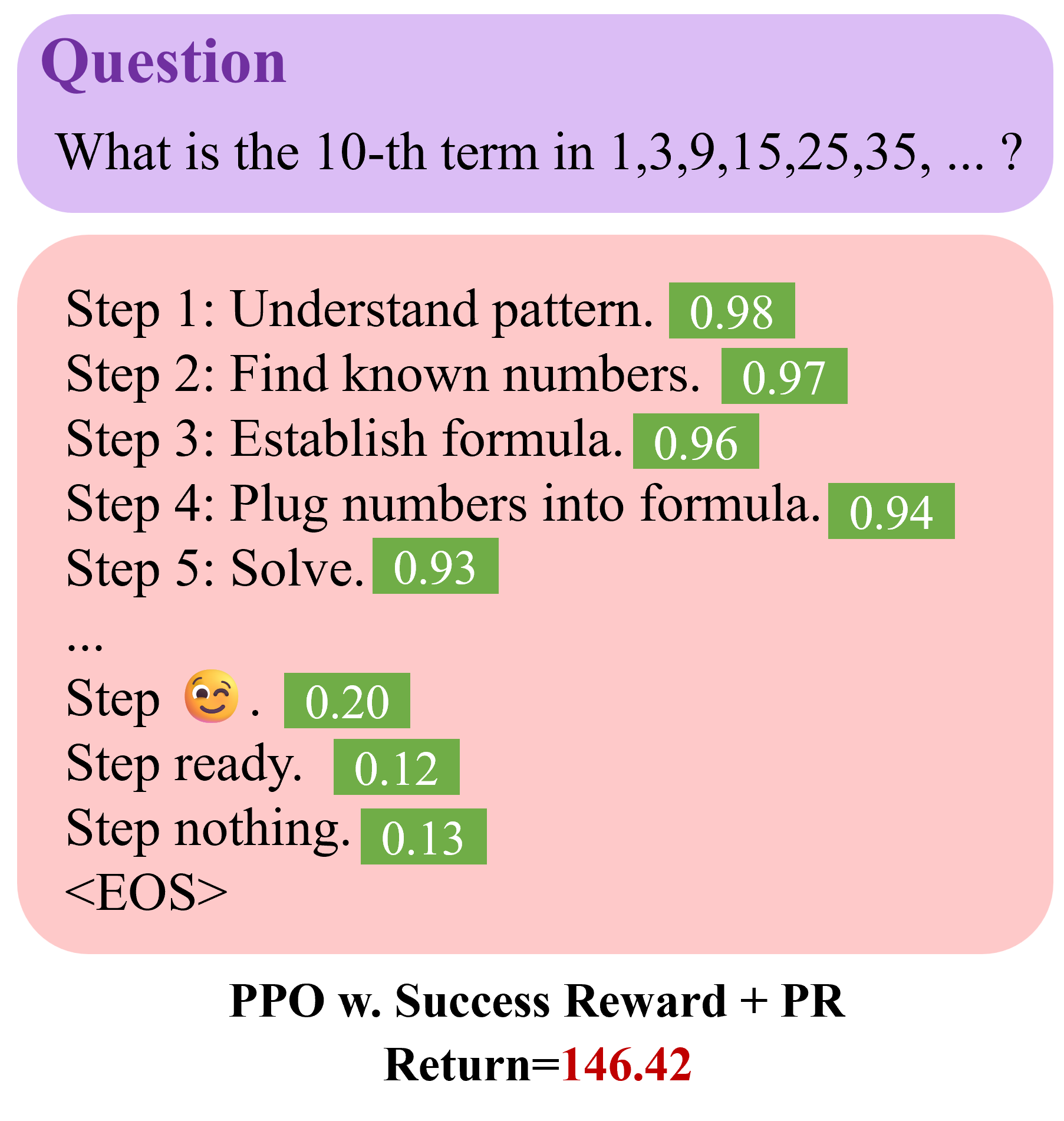}
    
\caption{{Case study of PR.} 
PRM provides rewards at the end of each step. 
 Through RL training with PR (Eq.~(\ref{eq:pr})), the LLM learns to generate many reasoning steps that do not contribute to problem-solving to achieve a high return.}
\label{fig:case-study}
\end{figure}

\paragraph{Case Study for PR.} For PR, a case study of the generated samples reveals the occurrence of the \emph{reward hacking} issue, that is, the LLM learns to obtain high rewards with some specific patterns without faithfully optimizing the ground-truth correctness through RL training.  In the generated solutions of PR, there are many short reasoning steps, but these steps only contain unnecessary or meaningless information that does not contribute to problem-solving. As the generation length increases, the model outputs only a single word or even emoji.

\begin{figure}
\centering     
    \subfigure[Repeat nonsense steps]{
        \label{fig:synthetic-nonsense}
        \includegraphics[height=33mm]{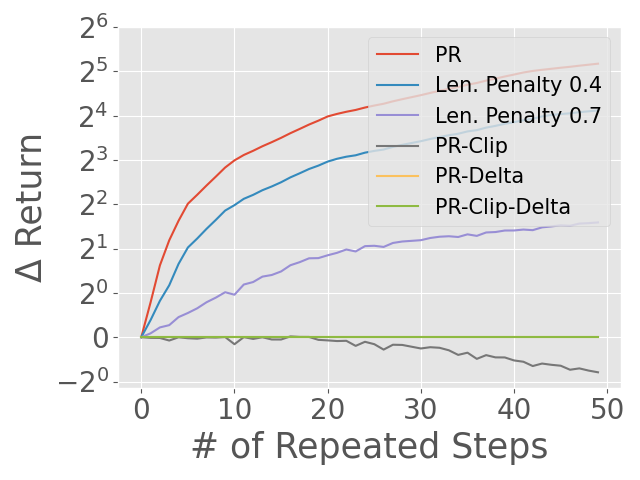}
    }
\hspace{-2mm}
    \subfigure[Repeat intermediate steps]{
        \label{fig:synthetic-mid-step}
        \includegraphics[height=33mm]{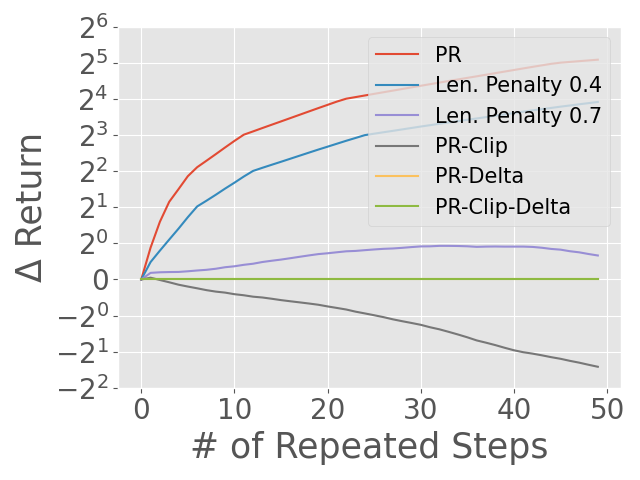} 
    }
    \hspace{-2mm}
    \subfigure[Repeat last sentences]{
        \label{fig:synthetic-last-sentence-length-norm}
        \includegraphics[height=33mm]{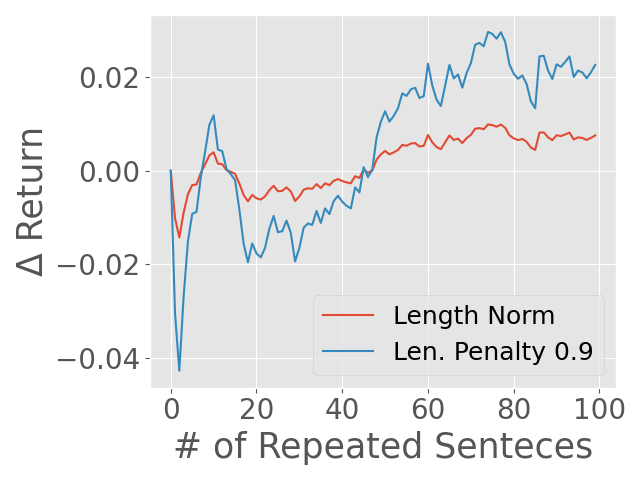} 
    }
\caption{Difference between the returns of synthetic solutions and the ground-truth solution. The synthetic solutions are constructed from the ground-truth solution by (a) repeatedly adding nonsense steps to the end of the sequence, (b) repeating an intermediate step, and (c) repeating the last sentence in the solution. 
A positive return difference indicates the repetitive patterns are favored over the ground-truth solution.
Both PR and length penalty can gain significantly high returns in (a) and (b). In (c), when sufficient repetitions are inserted, length normalization and length penalty would assign a higher return to the synthetic solution than to the ground-truth solution.}
\label{fig:synthetic}
\end{figure}

\paragraph{Analysis for PR.} The rewards of unnecessary reasoning steps are positive and could even be large, as shown in the case study (Fig.~\ref{fig:case-study}). The LLM learns to exploit this phenomenon by generating more reasoning steps, resulting in a higher return.
We further confirm the reward hacking behavior through some synthetic reasoning trajectories (Fig.~\ref{fig:synthetic-nonsense} and Fig.~\ref{fig:synthetic-mid-step}), where PR demonstrates extremely larger returns. This indicates that the PRM cannot effectively classify meaningless repetition as poor, which encourages the LLM to favor these unproductive steps. We observe two key properties when combining PR and the success reward for RL training,
\begin{itemize}
\item \textbf{The LLM learns to identify reasoning steps that yield high rewards but do not contribute to problem-solving.}  Specifically, reasoning steps that contain meaningless or unnecessary information can gain high rewards. 
\item \textbf{The RL objective can be optimized with simple patterns that do not improve the overall accuracy.}  For PR, infinitely high returns can be achieved by generating more unnecessary reasoning steps. However, the addition of unnecessary reasoning steps can not guide the LLM to improve accuracy.
\end{itemize}

\paragraph{Takeaways.} Here are two key takeaways regarding the impact of applying ORM and PRM in RL training,
\begin{itemize}
\item For ORM, it does not improve over the sparse success reward.
We hypothesize this is because, when a success reward is available during training time, ORM does not provide additional supervision signal and should not be a preferred choice at RL training time.
\item PRM would lead to a severe reward hacking issue during RL training due to repetition. Although PRM provides useful training signals, it is critical to prevent reward hacking.
\end{itemize}


\subsection{Techniques for Mitigating Reward Hacking}

\begin{figure}[h]
    \centering
    \includegraphics[width=0.9\linewidth]{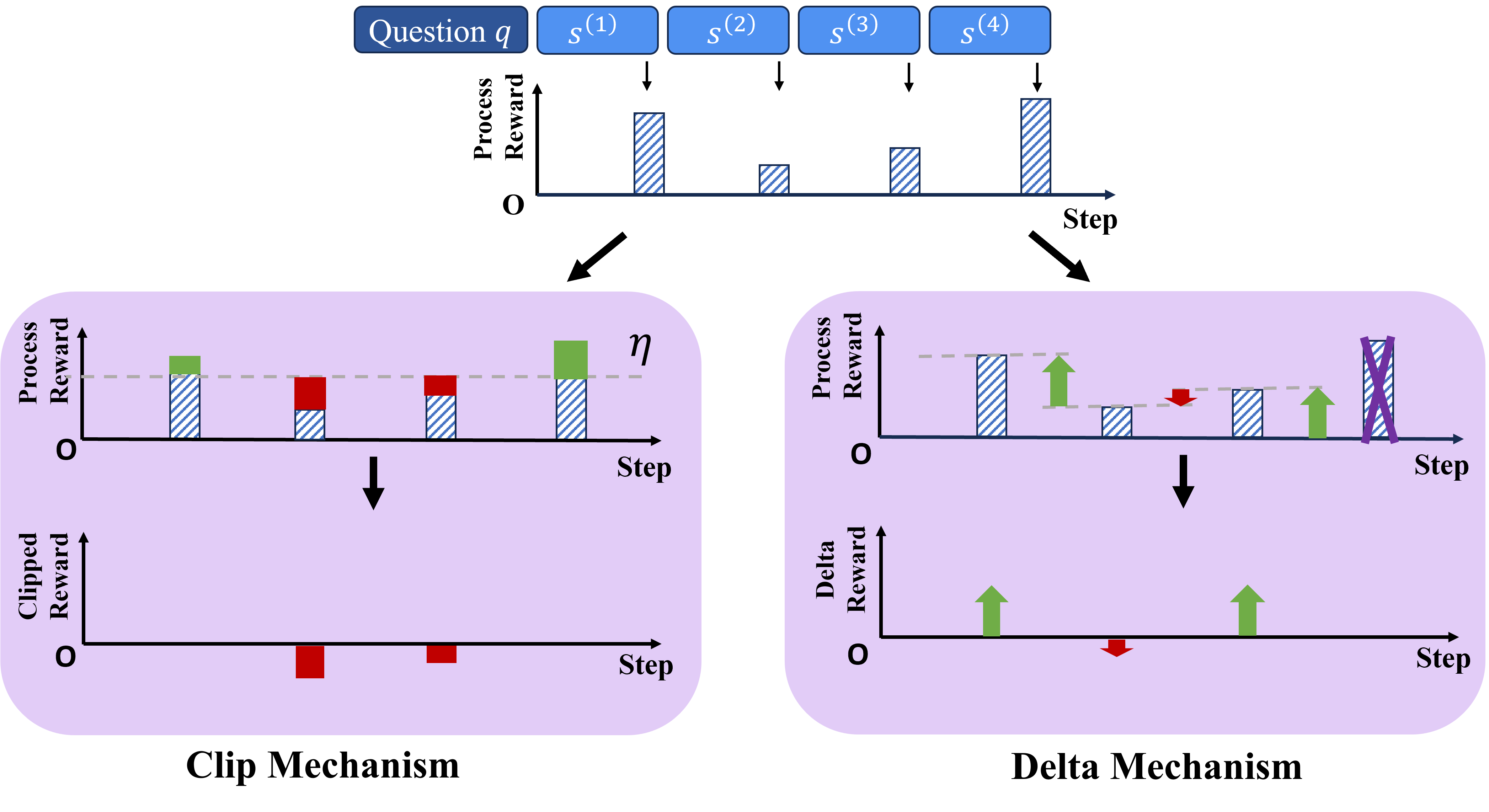}
    \caption{The \emph{Clip} mechanism and the \emph{Delta} mechanism. The Clip mechanism subtracts the rewards with a suitable threshold $\eta$ and upper-bounds all rewards with zero (Eq.~(\ref{eq:clip})). The Delta mechanism drops the last-step reward and computes the difference of rewards between two adjacent steps (Eq.~(\ref{eq:delta})). These mechanisms can alleviate the reward hacking issue of PRM in RL training.}
    \label{fig:mechanisms}
\end{figure}

Since ORM does not provide dense feedback for RL training and may lack additional information beyond the success reward during training, PRM can be a more suitable source for dense rewards. However, as analyzed in Sec.~\ref{sec:common}, PRM 
may enable an LLM to achieve an excessively high return by repeating unnecessary reasoning steps. 
To maintain a bounded objective while leveraging the ability of PRM to promote better reasoning skills, we introduce two novel techniques designed to utilize PRM in RL training effectively,
\begin{itemize}
\item \textbf{\emph{Clip} mechanism.} To prevent the LLM from exploiting the reward model by repetition, a straightforward idea is to upper-bound high rewards. Specifically, rewards $r_{\text{process}}$ are upper-bounded by a selected threshold $\eta$. We further ensure the return of a trajectory is bounded by subtracting all rewards by $\eta$. Formally, with a threshold $\eta$, 
\begin{align}
\label{eq:clip}
r(q,p^{(k)})=\min(r_{\text{process}}(q, p^{(k)})-\eta, 0)
\end{align}
If a suitable $\eta$ is chosen, the majority of the reasoning steps would receive a reward of 0, and only steps with low $r_{\text{process}}$ would have a negative reward.

\item \textbf{\emph{Delta} mechanism.} Alternatively, \emph{Delta} mechanism can effectively upper-bound the RL objective during training by subtracting the rewards between adjacent steps. For a solution, the reward for the last reasoning step is dropped since the success reward would be sufficient to provide guidance for the last reasoning step. Formally, for a solution prefix $p^{(k)}$,

\begin{align}
\label{eq:delta}
r(q,p^{(k)})=\begin{cases}r_{\text{process}}(q,p^{(k)})-r_{\text{process}}(q,p^{(k+1)})&\text{        if }k<K-1\\r_{\text{process}}(q,p^{(k)})&\text{              if }k=K-1\\
0&\text{              if }k=K\end{cases}
\end{align}

A nice property of the Delta mechanism is that it ensures the return of a solution is $\alpha\cdot r_{\text{process}}(q,s^{(1)}) +\text{Correct}(q,s)$, which is bounded since the maximum output value of a PRM is $1$. Furthermore, the return starting from any intermediate solution step $p^{(k)}$ is $\alpha\cdot r_{\text{process}}(q,p^{(k)}) +\text{Correct}(q,s)$, which is unaffected by the process rewards of future steps. Further analysis is provided in Appendix.~\ref{app:theory}.
\end{itemize}

Both the Clip and Delta mechanisms can be used individually or in combination. In practice, we consider three approaches incorporating these mechanisms:
\begin{enumerate}
\item \textbf{Process Reward with Clip mechanism (PR-Clip):} This applies the Clip mechanism.
\item \textbf{Process Reward with Delta mechanism (PR-Delta):} This employs the Delta mechanism.
\item \textbf{Process Reward with Clip \& Delta mechanism (PR-Clip-Delta):} The Clip mechanism is applied first, followed by the Delta mechanism.
\end{enumerate}

We further perform evaluation on synthetic solutions that exhibit repetitive patterns in different ways. As shown in Fig.~\ref{fig:synthetic-mid-step} and Fig.~\ref{fig:synthetic-nonsense}, the Clip mechanism and the Delta mechanism can both successfully limit the upper bound of the returns on these synthetic solutions. Additionally, the Clip mechanism imposes increasingly smaller returns as the length of the repetitive pattern grows.

\paragraph{Other Practices.} We also compare with some adopted practices to avoid reward hacking in prior works~\citep{singhal2023long}, including length normalization and length penalty. More details can be found in Appendix~\ref{app:rw-baselines}. Length normalization normalizes the rewards for each solution. Length penalty imposes a constant penalty for each step. As illustrated in Fig.~\ref{fig:synthetic}, imposing length penalty and length normalization could still favor the undesired repetition modes over correct solutions. We also investigate standard normalization for PRM as employed by \cite{shao2024deepseekmath}, which we find would lead to training instability. More details can be found in Sec.~\ref{sec:ep-ablation}.
\section{Experiments}
\label{sec:exp}

In this section, we perform full RL training with different reward designs to further examine how to ensure a learned reward model can be effective at training time. We will first illustrate our experiment setup in Sec.~\ref{sec:exp-setup}, then conduct ablation studies in Sec.~\ref{sec:ep-ablation} and finally present our main results on 1.5B\&7B models in Sec.~\ref{sec:exp-main}.

\subsection{Experiment Setup}
\label{sec:exp-setup}

\paragraph{Training Dataset.} We conduct RL training on the MathInstruct~\citep{yue2023mammoth} dataset. In particular, we only use the questions and the golden answers in the dataset while the provided solutions are not used for training.
To constitute the reward training dataset, we use Qwen2-7B-Instruct to sample 16 answers for each question in the training dataset and keep those questions that have both correct and wrong answers. To train an ORM, binary cross entropy loss is adopted. For PRM training, we follow \cite{wang2024math} to generate automatic process annotations by using Qwen2-7B-Instruct as the completer. Specifically, for each step in the generated samples, we use the completer to sample $8$ solutions starting from the solution prefix. This step is labeled as correct if any of these 8 solutions reaches final correctness.

\paragraph{Benchmarks \& Metrics.} We carry out our evaluation on the GSM8K~\citep{GSM8K} and MATH \citep{MATH} datasets. We ensure there is no data contamination issue, that is, the questions in the test sets do not appear in the training set. For evaluation metrics, we report the \emph{Greedy} and \emph{Sampling} scores, which correspond to the accuracy when adopting greedy decoding and sampling with temperature of 1 as generation strategies, respectively. To further understand the impact of RL, we also report Pass@16, which evaluates the probability a model can generate the correct answer out of 16 trials.

\paragraph{Base Models.} Our experiments are taken over a series of large language models from the Qwen2~\citep{qwen2} family and the state-of-the-art LLMs for mathematical reasoning, Qwen2.5~\citep{yang2024qwen2} family. Specifically, we use various 1.5B and 7B LLMs, including general and math-specific models. For general models, we consider Qwen2-1.5B-Instruct and Qwen2-7B-Instruct. For math-specific models, we consider Qwen2-Math-1.5B-Instruct, Qwen2.5-Math-1.5B-Instruct, Qwen2-Math-7B-Instruct and Qwen2.5-Math-7B-Instruct. Note that these LLMs already equip sufficient instruction following ability and we do not perform any further supervised fine-tuning. Lastly, the PRM is trained with the same base model as the actor model.



\paragraph{RL Training} We adopt the Proximal Policy Optimization (PPO) implementation of ReaLHF \citep{mei2024realhf}, which supports fine-tuning LLMs with dense rewards. 
Following prior practices~\citep{shao2024deepseekmath,xu2024dpo}, we adopt a large batch size and sample multiple solutions for each question within a batch. For 1.5B models, there are $1024$ questions, and $8$ solutions are sampled for each question in a batch, leading to a batch size of $1024\times 8$. For 7B models, the batch size is $4096 \times 8$.
\footnote{We also conduct an ablation study on PPO batch size in Appendix~\ref{sec:ablation_bsz}.}
Each training batch is split into 4 minibatches. We apply a KL penalty coefficient of 0.1, a coefficient of 1 for dense rewards, and a coefficient of 5 for successful rewards. The learning rates of 1B and 7B actor models are 1e-6 and 1e-5, respectively, while all critic models use a learning rate of 5e-6. We use Adam optimizer weight decay of $0.05$. 
The 1.5B models are trained on a cluster of 4 machines, each with 8 Nvidia H100 GPUs, for approximately 8 hours. The 7B models are trained on a cluster of 8 machines, each with 8 Nvidia H100 GPUs, for approximately 20 hours.

\subsection{Ablation Study}
\label{sec:ep-ablation}

 \begin{table}[]
    \centering
    \begin{tabular}{c|cc}
    \toprule
         Method & Greedy & Sampling\\
    \midrule
         Qwen2-1.5B-Instruct & 24.90 & 16.79 \\
    \midrule
         Success Reward & 30.58 & 27.05 \\
         SR + OR & 30.57 & 27.12 \\
         SR + PR \textbf{(E1)} & 11.16 & 14.68 \\
         SR + PR-Normed \textbf{(E2)} & 29.66 & 27.14 \\
         SR + PR-Normed \textbf{(E5)} & 12.36 & 12.84 \\
    \midrule
         SR + PR-Clip & 30.30 & \textbf{28.40} \\
         SR + PR-Delta & 30.68\ & 27.96 \\
         SR + PR-Clip-Delta & \textbf{31.44} & 28.20 \\
    \bottomrule
    \end{tabular}
    \vspace{2mm}
    \caption{Ablation study of various reward functions on MATH with Qwen2-1.5B-Instruct. The results are tested on the MATH test set using greedy decoding and sampling.
    We train the base models for 5 epochs. For OR, PR-Clip, PR-Delta, and PR-Clip-Delta, we report the accuracy of the final model. For PR and PR-Normed, significant performance degradation happens in later epochs and thus we report the performance in early epochs. Here, \textbf{E1} denotes the results of the 1-st epoch.}
    \label{tab:ablation}
 \end{table}

\paragraph{The Clip Mechanism \& The Delta Mechanism} Our ablation study of the Clip mechanism and the Delta mechanism is presented in Table~\ref{tab:ablation}. We also consider a standard normalization variant of PR~\citep{shao2024deepseekmath}, denoted as PR-Normed. PPO training with OR can not surpass training with a sparse success reward. PR demonstrates severe performance degradation during training due to the reward hacking issue discussed in Sec.~\ref{sec:reward-hacking}. Similarly, the performance of PR-Normed also decreases in the latter epochs. Consequently, none of OR, PR, and PR-Normed can achieve higher greedy decoding accuracy than training with a success reward. On the other hand, the Delta mechanism successfully stabilizes RL training, surpassing training with a success reward. Finally, by combining the Clip mechanism and the Delta mechanism, PR-Clip-Delta demonstrates the best greedy decoding accuracy.

\paragraph{Effect of PR-Clip-Delta} We compare the performance improvements of PPO training over the base LLMs when using a success reward and additionally using PR-Clip-Delta as dense rewards in Fig.~\ref{fig:perf-improve}. In addition to Greedy and Sampling scores, we also consider the Pass@16 score, which we believe can roughly estimate the upper bound of the model's capacity. Using PR-Clip-Delta as dense rewards can consistently improve RL training, across all LLMs and all evaluation metrics, except the greedy decoding accuracy on Qwen2-Math-7B-Instruct. This suggests that applying the Clip mechanism and the Delta mechanism can effectively utilize the PRM to guide the LLM in learning better reasoning skills during RL training. We report the detailed numbers in Appendix~\ref{sec:addition_result}.

\subsection{Main Results}
\label{sec:exp-main}

\begin{table}[t]
\centering
\begin{tabular}{c|cccc}
\toprule
Model  & \multicolumn{2}{c}{GSM8K} & \multicolumn{2}{c}{MATH} \\
& Greedy & Sampling & Greedy & Sampling \\
\midrule
GPT-4o-2024-08-06 & 92.9 & - & 81.1 & - \\
\midrule
DeepSeekMath-7B-RL & 88.2 & - & 52.4 & - \\
Internlm2-math-plus-7B & 84.0 & - & 54.4 & - \\
Mathstral-7B-v0.1  & 84.9 & - &  56.6 & - \\
NuminaMath-7B-CoT & 75.4 & - &  55.2 & - \\
Llama-3.1-8B-Instruct  & 76.6 & - & 47.2 & - \\
\midrule
\multicolumn{5}{c}{1.5B Models} \\
\midrule
Qwen2-1.5B-Instruct &  50.19 & 44.58 & 24.90 & 16.79 \\
\rowcolor{bg!70}
+ PPO w. (SR + PR-Clip-Delta) & 68.76\bbonus{18.57} & 66.19\bbonus{21.61} & 31.44\bbonus{6.54} & 28.20\bbonus{11.41} \\

\midrule
Qwen2-Math-1.5B-Instruct & 83.62 & 81.50 & 69.98 & 64.51 \\
\rowcolor{bg!70}
+ PPO w. (SR + PR-Clip-Delta) & 85.67\bbonus{2.05} & 84.76\bbonus{3.26} & 70.94\bbonus{0.96} & 68.13\bbonus{3.62} \\
\midrule
Qwen2.5-Math-1.5B-Instruct  & 85.14 & 82.11 & 76.00 & 72.05\\
\rowcolor{bg!70}
+ PPO w. (SR + PR-Clip-Delta) & 87.34\bbonus{2.20} & 85.97\bbonus{3.86} & 76.78\bbonus{0.78} & 74.63\bbonus{2.58} \\
\midrule
\multicolumn{5}{c}{7B Models} \\
\midrule
Qwen2-7B-Instruct & 86.88  & 80.44 &  57.54 & 48.27 \\
\rowcolor{bg!70}
+ PPO w. (SR + PR-Clip-Delta) & 87.64\bbonus{0.76}  &  87.34\bbonus{6.90} & 60.54\bbonus{3.00} & 58.17\bbonus{9.90} \\
\midrule
Qwen2-Math-7B-Instruct & 89.61 & 89.23 & 75.30 & 72.09  \\
\rowcolor{bg!70}
+ PPO w. (SR + PR-Clip-Delta) & 90.90\bbonus{1.29} & 90.14\bbonus{0.91} & 76.00\bbonus{0.70} & 74.09\bbonus{2.00} \\
\midrule
Qwen2.5-Math-7B-Instruct & 95.60 & 80.74 & 83.30 & 52.76 \footnotemark{} \\
\rowcolor{bg!70}
+ PPO w. (SR + PR-Clip-Delta) & 95.60$^{0.00}$ & 95.07 \bbonus{14.33} & 83.38\bbonus{0.08} & 81.22\bbonus{28.46} \\
\bottomrule

\end{tabular}
\vspace{2mm}
\caption{Greedy and Sampling scores on GSM8K and MATH benchmarks. {PPO training using sparse success rewards and PR-Clip-Delta as dense rewards consistently improve all evaluated LLMs, including the state-of-the-art 7B LLMs, Qwen2.5-Math-7B-Instruct.} For sampling decoding, we adopt the temperature of 1.0. }
\label{tab:main}
\end{table}


\paragraph{Main Results} Our main results are summarized in Table.~\ref{tab:main}. 
RL training consistently improves the performance of the base model across all the models we test, even on the state-of-the-art 1.5B model, Qwen2.5-Math-1.5B-Instruct, and 7B model, Qwen2.5-Math-7B-Instruct. 
For 1.5B models, Qwen2-1.5B-Instruct obtains the most significant performance improvement. Through RL training with PR-Clip-Deta as reward function, the best 1.5B model, Qwen2.5-Math-1.5B-Instruct achieves 87.34\% and 76.78\% greedy decoding accuracy on GSM8K and MATH benchmark respectively, indicating 2.20\% and 0.78\% improvement of accuracy over the base model.
For 7B models, building on the strongest 7B LLM, Qwen2.5-Math-7B-Instruct, RL training with dense reward further boosts the performance and achieves 95.6\% and 83.38\% greedy decoding accuracy on GSM8K and MATH benchmarks, respectively, surpassing several baselines. It is noteworthy that Qwen2.5-Math-7B-Instruct is already trained using RL, and our results indicate that RL with a carefully crafted dense reward can further enhance its performance, highlighting the effectiveness of PR-Clip-Delta.

\begin{figure}[t]
\vspace{-5mm}
\centering     
    \subfigure[$\Delta$ Greedy]{
        \label{fig:perf_gain_greedy}
        \includegraphics[height=42mm]{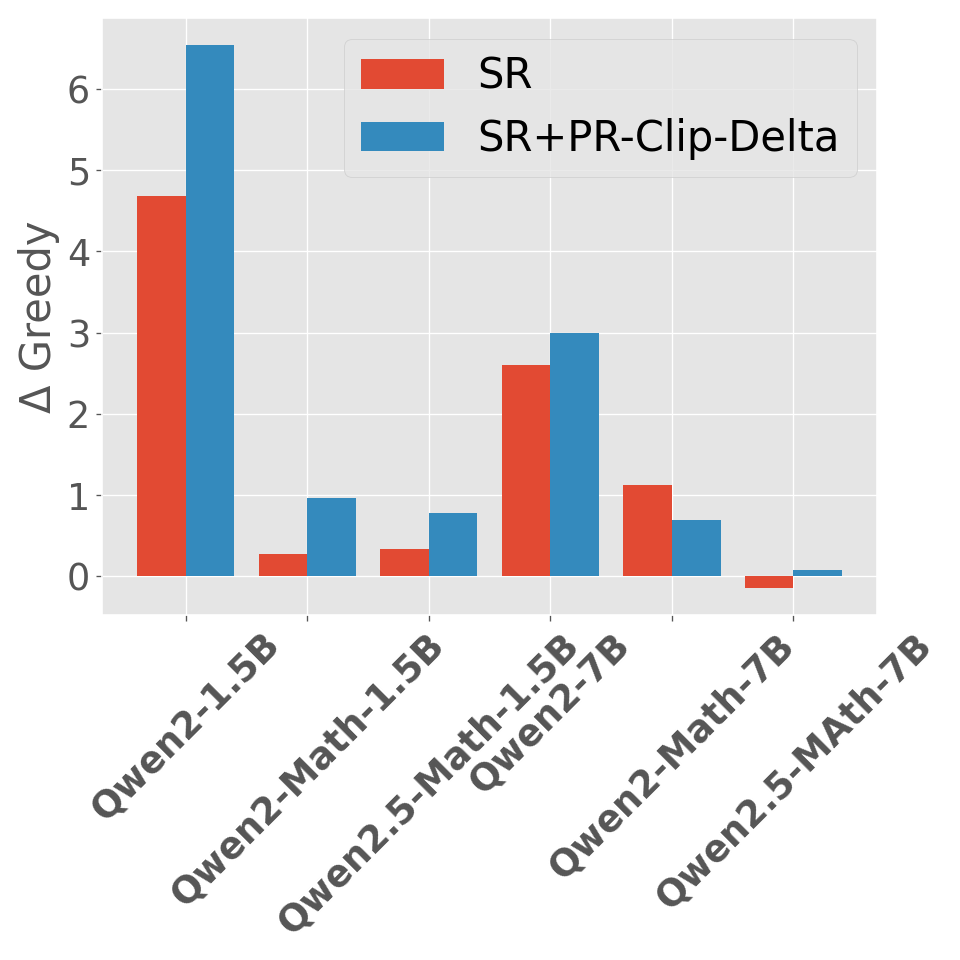}
    }
    \subfigure[$\Delta$  Sampling]{
        \label{fig:perf_gain_pass1}
        \includegraphics[height=42mm]{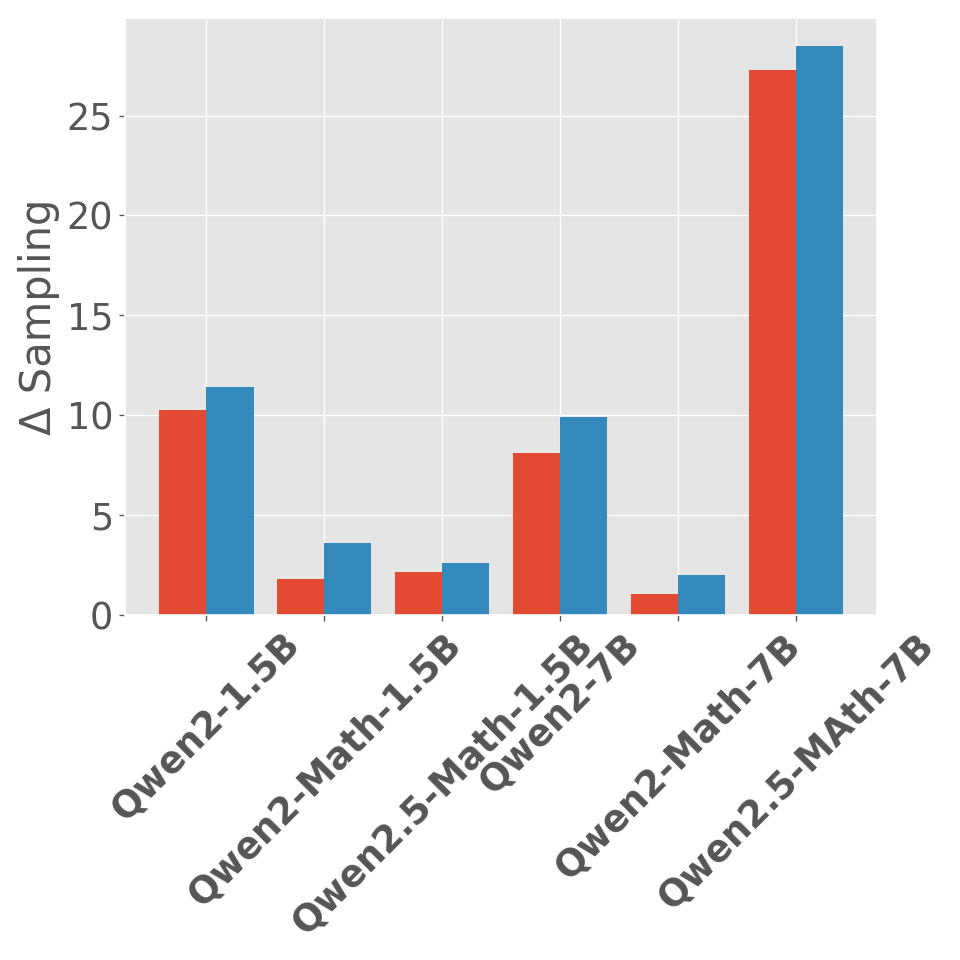} 
    }
    \subfigure[$\Delta$  Pass@16]{
        \label{fig:perf_gain_pass16}
        \includegraphics[height=42mm]{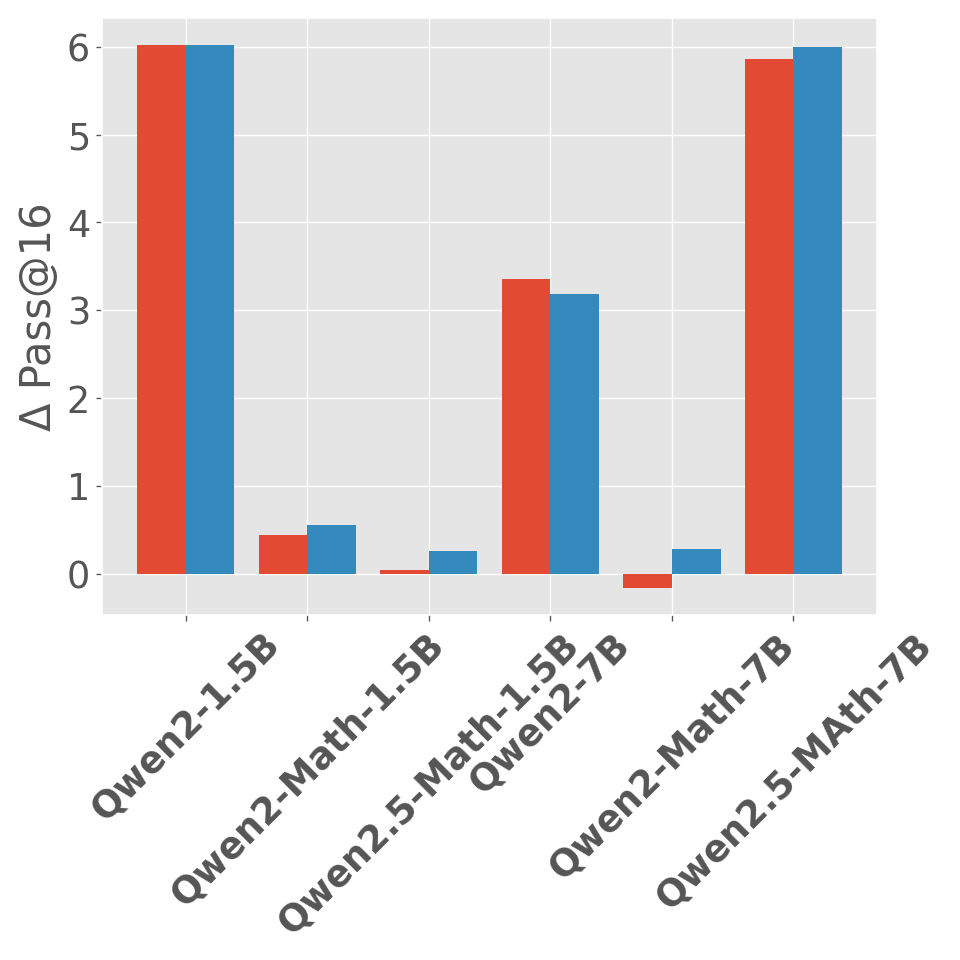}
    }
\caption{Performance improvement of PPO training over the base LLMs using success rewards and further using PR-Clip-Delta as dense rewards. All LLMs are the instruction following model and the "-Instruct" suffices are omitted for improved clarity. Adding PR-Clip-Delta as dense rewards consistently improve RL training with sparse success rewards only.} 
\label{fig:perf-improve}
\end{figure}

\paragraph{Performance Improvement} 
The performance improvement of RL training varies across models with different amounts of parameters and different strengths. In general, weaker models gain higher performance improvements than stronger models. Comparing the improvements of Greedy and Sampling scores, the improvements of Sampling score are larger than those of Greedy score across all LLMs, resulting in a smaller gap between Sampling and Greedy scores. Interestingly, we also highlight the comparison between Qwen2.5-1.5B-Instruct and Qwen2-7B-Instruct since both models have very close performance on MATH but have different amounts of parameters. The smaller 1.5B model, Qwen2.5-1.5B-Instruct, has a more significant improvement than and can surpass the larger 7B model, Qwen2-7B-Instruct, on both MATH and GSM8K benchmarks. 

\footnotetext{For sampling accuracy, we find that Qwen-2.5-math-Instruct is likely to generate strange characters, leading to poor sampling accuracy.}

\section{Conclusion}

In this work, we investigate designing dense rewards with a process-supervised reward model in RL training to improve the reasoning ability of LLMs. We examine some popular reward models and identify the issue of reward hacking, which manifests as the generation of nonsensical texts or unnecessary reasoning steps. The reward hacking issue can be mitigated with our proposed techniques, using the Clip mechanism to prevent the LLM from exploiting the reward model and the Delta mechanism to ensure a bounded RL objective. We show that the proposed techniques can be utilized to apply Process-supervised Reward Models for improved RL training. 

\paragraph{Limitations.} Limited by computation resources, our experiments are conducted over 1.5B\&7B LLMs, while evaluations on larger LLMs could further help verify our proposed techniques. Also, it is an interesting direction to perform various inference-time search strategies with the LLMs trained with PPO, which could help further understand whether RL training can improve search performance. Furthermore, we believe that with the support of more powerful reward models, RL training can bring greater benefits to LLM reasoning.

\bibliography{main}
\bibliographystyle{preprint}

\newpage
\appendix

\section{Additional results}
\label{sec:addition_result}

In Tab.~\ref{tab:sparse_vs_prm} and Tab.~\ref{tab:sparse_vs_prm_gsm}, we report the results of RL training on different base models, including those with success rewards and after applying PR-Clip-Delta.

\begin{table}[htb]
\centering
\begin{tabular}{@{}c|c|ccc@{}}
\toprule
Model                                & Method           & \multicolumn{3}{c}{Math}                         \\ \midrule
                              &                  & Greedy         & Sample         & Pass@16        \\ \midrule
\multirow{3}{*}{Qwen2-1.5B-Instruct}   & Basemodel    & 24.90 & 16.79 &  55.68       \\ \cmidrule(l){2-5} 
                                     & Success Reward & 30.58\bbonus{4.68} & 27.05\bbonus{10.26} & 61.70\bbonus{6.02}          \\
                                     & + PR-Clip-Delta &  31.44\bbonus{6.54} & 28.20\bbonus{11.41} & 61.70\bbonus{6.02}    \\ \midrule
\multirow{3}{*}{Qwen2-Math-1.5B-Instruct}   & Basemodel   & 69.98 & 64.51 & 88.02 \\ 
\cmidrule(l){2-5} 
                                     & Success Reward & 70.26\bbonus{0.28} & 66.29\bbonus{1.78} & 88.46\bbonus{0.44} \\
                                     & + PR-Clip-Delta &  70.94\bbonus{0.96} & 68.13\bbonus{3.62} & 88.58\bbonus{0.56} \\  \midrule

\multirow{3}{*}{Qwen2.5-Math-1.5B-Instruct}   & Basemodel   & 76.00 & 72.05 & 90.50 \\ \cmidrule(l){2-5} 
                                     & Success Reward & 76.34\bbonus{0.34} & 74.22\bbonus{2.17} & 90.54\bbonus{0.04} \\
                                     & + PR-Clip-Delta & 76.78\bbonus{0.78} & 74.63\bbonus{2.58} & 90.76\bbonus{0.26} \\  
\midrule
\midrule
\multirow{3}{*}{Qwen2-7B-Instruct}   & Basemodel    & 57.54 & 48.27 & 80.04          \\ \cmidrule(l){2-5} 
                                     & Success Reward & 60.14\bbonus{2.60} & 56.39\bbonus{8.12} & 83.40\bbonus{3.36}          \\
                                     & + PR-Clip-Delta &  60.54\bbonus{3.00} & 58.17\bbonus{9.90} & 83.22\bbonus{3.18}   \\ \midrule
\multirow{3}{*}{Qwen2-Math-7B-Instruct}   & Basemodel   & 75.30 & 72.09 & 91.24 \\ \cmidrule(l){2-5} 
                                     & Success Reward      & 76.42\bbonus{1.12} & 73.12\bbonus{1.03} & 91.08\ddrop{0.16}  \\
                                     & + PR-Clip-Delta & 76.00\bbonus{0.70} & 74.09\bbonus{2.00} & 91.52\bbonus{0.28} \\  \midrule

\multirow{3}{*}{Qwen2.5-Math-7B-Instruct}   & Basemodel   & 83.3 & 52.76 & 86.6 \\ \cmidrule(l){2-5} 
                                     & Success Reward      & 83.16\ddrop{0.14} & 79.95\bbonus{27.19} & 92.46\bbonus{5.86}  \\
                                     & + PR-Clip-Delta &
                                     83.38\bbonus{0.08} & 81.22\bbonus{28.46} & 92.60\bbonus{6.00}\\  \bottomrule
\end{tabular}
\caption{Results on MATH test set}
\label{tab:sparse_vs_prm}
\end{table}

\begin{table}[htb]
\centering
\begin{tabular}{@{}c|c|cc@{}}
\toprule
Model                                & Method           & \multicolumn{2}{c}{GSM8K}                         \\ \midrule
                              &                  & Greedy         & Sample                \\ \midrule
\multirow{3}{*}{Qwen2-1.5B-Instruct}   & Basemodel    & 50.19 & 44.58      \\ 
\cmidrule(l){2-4} 
                                     & Success Reward & 67.70\bbonus{17.51} & 65.50\bbonus{20.92} \\
                                     & + PR-Clip-Delta & 68.76\bbonus{18.57} & 66.19\bbonus{21.61}  \\
                                     \midrule
\multirow{3}{*}{Qwen2-Math-1.5B-Instruct}   & Basemodel   & 83.62 & 81.50  \\ 
\cmidrule(l){2-4} 
                                     & Success Reward & 84.61\bbonus{0.99} & 83.93\bbonus{2.43}  \\
                                     & + PR-Clip-Delta & 85.67\bbonus{2.05} & 84.76\bbonus{3.26}  \\  \midrule

\multirow{3}{*}{Qwen2.5-Math-1.5B-Instruct}   & Basemodel   & 85.14 & 82.11 \\ 
\cmidrule(l){2-4} 
                                     & Success Reward & 86.73\bbonus{1.59} & 85.82\bbonus{3.71}  \\
                                     & + PR-Clip-Delta & 87.34\bbonus{2.20} & 85.97\bbonus{3.86}  \\  
\midrule
\midrule
\multirow{3}{*}{Qwen2-7B-Instruct}   & Basemodel    & 86.88 & 80.44  \\ 
\cmidrule(l){2-4} 
                                     & Success Reward &  87.72\bbonus{0.84} & 86.81\bbonus{6.37} \\
                                     & + PR-Clip-Delta &  87.64\bbonus{0.76} &  87.34\bbonus{6.90} \\ 
                                    \midrule
\multirow{3}{*}{Qwen2-Math-7B-Instruct}   & Basemodel  & 89.61  & 89.23 \\ 
\cmidrule(l){2-4} 
                                     & Success Reward  & 89.46\ddrop{0.15} & 90.07\bbonus{0.84} \\
                                     & + PR-Clip-Delta & 90.90\bbonus{1.29} & 90.14\bbonus{0.91}  \\  \midrule

\multirow{3}{*}{Qwen2.5-Math-7B-Instruct}   & Basemodel   & 95.60 & 80.74 \\ 
\cmidrule(l){2-4} 
                                     & Success Reward      &  95.45\ddrop{0.15} & 95.07\bbonus{14.33} \\
                                     & + PR-Clip-Delta  & 95.60\bbonus{0.00} & 95.07\bbonus{14.33} \\  
                                \bottomrule
\end{tabular}
\caption{Results on GSM8K test set}
\label{tab:sparse_vs_prm_gsm}
\end{table}

In Fig.~\ref{fig:prm-normed_acc_curve}, we report the greedy accuracy on MATH test set of different training epochs, where epoch-0 means the base model (i.e., Qwen2-1.5B-Instruct). The introduction of PR-norm caused the model's accuracy to drop significantly starting from the third epoch.

\begin{figure}[ht]
    \centering
    \includegraphics[width=0.5\linewidth]{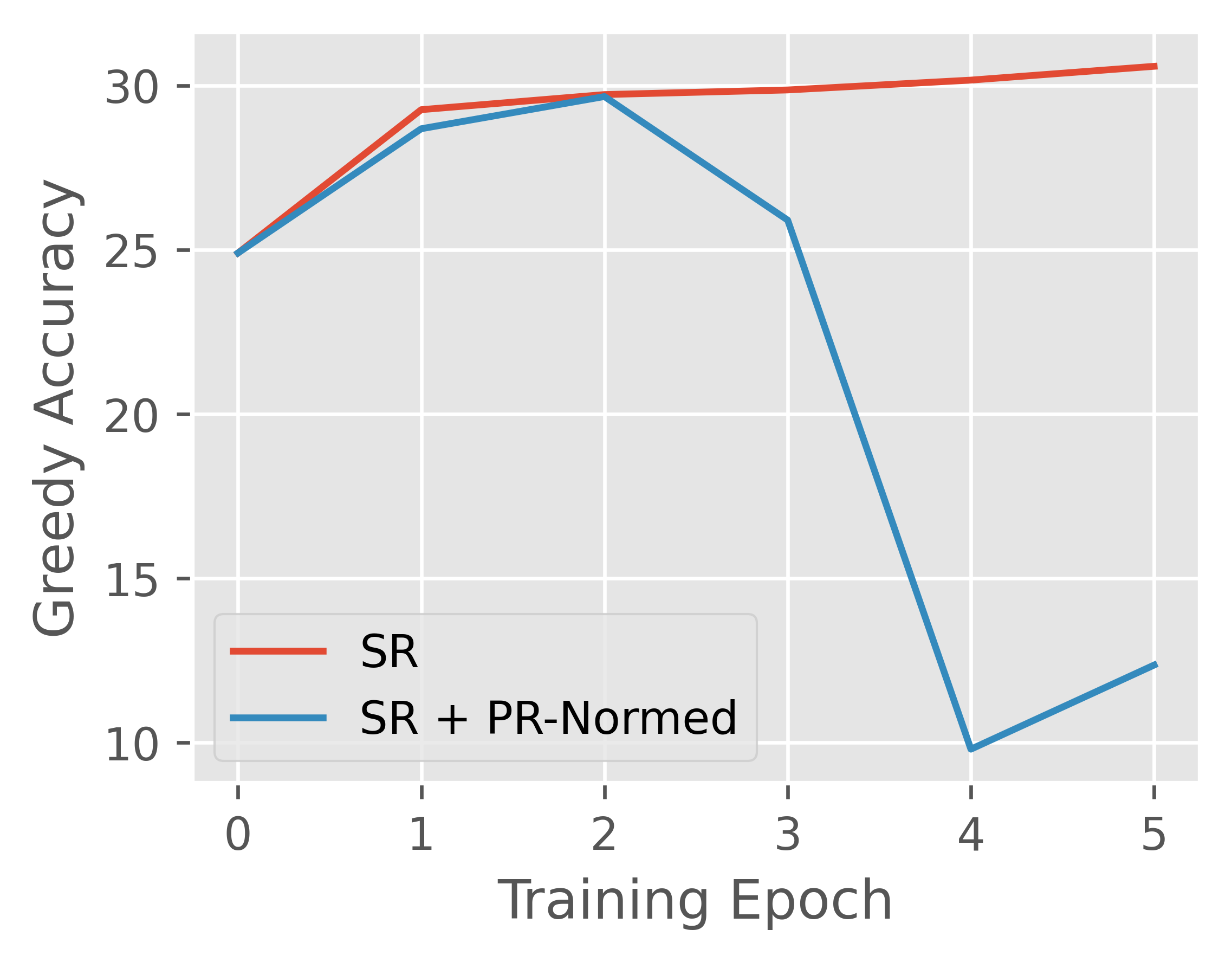}
    \caption{Greedy accuracy on MATH test set during the training process.}
    \label{fig:prm-normed_acc_curve}
\end{figure}

\section{Ablation Study on PPO Batch Size}
\label{sec:ablation_bsz}
We conduct an ablation study on PPO batch size in Tab.~\ref{tab:ablation_bsz} and Tab.~\ref{tab:ablation_bsz_2}, where $a \times b$ denotes $a$ questions with $b$ solutions sampled per question within each batch. The results indicate that larger final batch sizes improve performance. With a fixed final batch size, configurations of $1024 \times 8$ and $2048 \times 4$ achieve higher accuracy compared to $8192 \times 1$. In practice, we recommend sampling approximately 8 solutions per question while increasing the number of questions in each batch.
\begin{table}[]
    \centering
    \begin{tabular}{c|ccc}
    \toprule
        PPO Batch Size & Greedy & Sample & Pass@16\\
    \midrule
        Qwen-2-1.5B-Instruct & 24.90 & 16.79 & 55.68 \\
    \midrule
        $1024 \times 1$ & 29.12 & 24.68 & 60.06 \\
        $2048 \times 1$ & 29.00 & 25.43 & 60.16 \\
        $8192 \times 1$ & 30.12 & 25.82 & 61.06 \\
    \midrule
        $2048 \times 4$ & \textbf{30.66} & 26.63 & 61.32 \\
        $1024 \times 8$ & 30.58 & \textbf{27.05} & \textbf{61.70}\\
    \bottomrule
    \end{tabular}
    \caption{The ablation study on PPO batch size is conducted using Qwen2-1.5B-Instruct and the MATH, where $a \times b$ represents $a$ questions with $b$ solutions sampled per question in each batch. All models are trained using only the Success Reward.}
    \label{tab:ablation_bsz}
\end{table}

\begin{table}[]
    \centering
    \begin{tabular}{c|ccc}
    \toprule
        PPO Batch Size & Greedy & Sample & Pass@16\\
    \midrule
        Qwen2-7B-Instruct & 57.54 & 48.27 & 80.04 \\
    \midrule
        $1024 \times 8$ & 59.06 & 56.35 & 82.32 \\
        $4096 \times 8$ & \textbf{60.14} & \textbf{56.39} & \textbf{83.40}\\
    \bottomrule
    \end{tabular}
    \caption{The ablation study on PPO batch size is conducted using Qwen2-7B-Instruct and the MATH. All models are trained using only the Success Reward.}
    \label{tab:ablation_bsz_2}
\end{table}

\newpage
\section{Synthetic Example}
In Fig.~\ref{fig:synthetic}, we insert nonsense steps or repeat the last sentences in the following manner. 
\begin{mdframed}
\textbf{Prompt:}

Tom found a box of 220 baseball cards. Being a thoughtful older brother, he wanted to save at least 50 percent of the cards for himself and split the remaining cards evenly among his 5 younger brothers. What is the lowest possible number of cards that Tom can save for himself that will also allow each of his 5 younger brothers to get an equal number of cards?
Please reason step by step, and put your final answer within $\backslash$boxed\{\}.
\\

\textbf{Answer by Qwen2-1.5B-instruct: }

Step 1: 50\% of 220 cards
$$
50\% \times 220 = 0.50 \times 220 = 110
$$

Therefore, Tom would save 110 of the cards for himself.

Step 2: Remaining cards that Tom would like to distribute evenly among his 5 younger brothers

$$
220 - 110 = 110
$$

Tom would have 110 cards left to distribute evenly among his 5 younger brothers.

Step 3: Forming sets of cards and evenly distributing

If the 110 remaining cards are to be divided into 5 sets of equal cards, each set must contain:

$$
\frac{110}{5} = 22
$$

Therefore, each of Tom's 5 younger brothers would get 22 cards, but since each set also must have 22 cards, the lowest possible number of cards for Tom to save so that each brother gets an equal number of cards means Tom would save:

$$
5 \times 22 = 110
$$

The lowest possible number of cards that Tom can save for himself that will allow each of his 5 younger brothers to get an equal number of cards is 110. This way, Tom will save enough cards to permit a remainder that will also allow one card to be shared by each of his brothers out of the total of 110 cards.

The answer is: $\boxed{110}$.
\\

\textbf{Repeat Nonsense}

Step 1: 50\% of 220 cards
$$
50\% \times 220 = 0.50 \times 220 = 110
$$
...

Step 3: Forming sets of cards and evenly distributing
...

\textcolor{red}{Step done.}

\textcolor{red}{Step done.}

\textcolor{red}{Step end.}

\textcolor{red}{Step end.}

\textcolor{red}{Step show.}

...
\\

\textbf{Repeat Mid Step}

Step 1: 50\% of 220 cards
$$
50\% \times 220 = 0.50 \times 220 = 110
$$
...

\textcolor{red}{
Step 2: Remaining cards that Tom would like to distribute evenly among his 5 younger...
}

\textcolor{red}{
Step 2: Remaining cards that Tom would like to distribute evenly among his 5 younger...
}

\textcolor{red}{
Step 2: Remaining cards that Tom would like to distribute evenly among his 5 younger...
}

...

Step 3: Forming sets of cards and evenly distributing
...

The answer is: $\boxed{110}$.
\\

\textbf{Repeat Last Sentence}

Step 1: 50\% of 220 cards
$$
50\% \times 220 = 0.50 \times 220 = 110
$$
...

Step 3: Forming sets of cards and evenly distributing
...

\textcolor{red}{The answer is: $\boxed{110}$.}

\textcolor{red}{The answer is: $\boxed{110}$.}

\textcolor{red}{The answer is: $\boxed{110}$.}

\textcolor{red}{The answer is: $\boxed{110}$.}

...
\end{mdframed}

\section{Baselines}
\label{app:rw-baselines}

\paragraph{Length Normalization.} Length normalization normalizes the rewards for each solution. Formally,
\begin{align*}
r(q, p^{(k)})=\frac{1}{K}r_{\text{process}}(q, p^{(k)})
\end{align*}

\paragraph{Length Penalty.} Length penalty imposes a constant penalty for each step.
\begin{align*}
r(q, p^{(k)})=r_{\text{process}}(q, p^{(k)}) - k * c_\text{penalty}
\end{align*}

\subsection{Theoretical Analysis}
\label{app:theory}

\begin{theorem} By applying the Delta mechanism to the process rewards, the return for any token in step $p^{(k)}$ is,
$$
\text{Return}(q, p^{(k)})=\begin{cases}\alpha\cdot r_{\text{process}}(q,p^{(k)})+\text{Correct}(q,s)&\text{if }k<K\\\text{Correct(q,s)}&\text{otherwise}\end{cases}
$$

\end{theorem}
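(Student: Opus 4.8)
The plan is to compute the cumulative discounted return directly from the definition of the per-step reward under the Delta mechanism (Eq.~(\ref{eq:delta})), treating the RL setup as undiscounted (or $\gamma=1$) as is standard in these token-level LLM formulations, and noting that dense rewards are only delivered at step boundaries while the success reward $\text{Correct}(q,s)$ is delivered once at the end. Concretely, for a token lying in step $p^{(k)}$, the return is the sum of all future per-step process rewards (scaled by $\alpha$) plus the terminal success reward. So I would write
\begin{align*}
\text{Return}(q,p^{(k)}) = \alpha\sum_{j=k}^{K} r(q,p^{(j)}) + \text{Correct}(q,s),
\end{align*}
and then substitute the three cases of Eq.~(\ref{eq:delta}) into this sum.

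The key step is recognizing the telescoping structure. For $k < K$, splitting off the $j=K$ term (which is $0$) and the $j=K-1$ term (which is $r_{\text{process}}(q,p^{(K-1)})$), the remaining terms for $k \le j \le K-2$ are $r_{\text{process}}(q,p^{(j)}) - r_{\text{process}}(q,p^{(j+1)})$, which telescope to $r_{\text{process}}(q,p^{(k)}) - r_{\text{process}}(q,p^{(K-1)})$. Adding back the $j=K-1$ term cancels the $-r_{\text{process}}(q,p^{(K-1)})$, leaving exactly $\alpha\cdot r_{\text{process}}(q,p^{(k)}) + \text{Correct}(q,s)$. For $k=K$, the only dense reward in the tail is $r(q,p^{(K)})=0$, so the return is just $\text{Correct}(q,s)$. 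I would also handle the small edge cases ($k=K-1$ directly, and $K=1$ where the first step is also the last-but-one by convention) to make sure the formula is stated cleanly, matching the ``nice property'' already claimed in the main text that the whole-solution return is $\alpha\cdot r_{\text{process}}(q,s^{(1)}) + \text{Correct}(q,s)$.

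I do not expect a serious obstacle here — this is a telescoping-sum argument. The main thing to be careful about is bookkeeping: being explicit that (i) the return for a token is the sum of rewards from its own step onward, not from the next step, which is why $r_{\text{process}}(q,p^{(k)})$ survives rather than $r_{\text{process}}(q,p^{(k+1)})$; (ii) the process reward $r(q,p^{(j)})$ is attached to the final token of step $j$, so every token within step $k$ sees the same tail and hence the same return; and (iii) the KL regularization term in Eq.~(\ref{eq:rl-dense}) is conventionally folded into the policy-gradient objective rather than the environment return, so it does not appear in $\text{Return}(q,p^{(k)})$. Once these conventions are fixed, the computation is a one-line telescope, and the boundedness corollary (return $\le \alpha + 1$ since $r_{\text{process}}\le 1$ and $\text{Correct}\le 1$) follows immediately, which is the whole point: unlike raw PR, the Delta-refined return cannot be inflated by appending repetitive steps.
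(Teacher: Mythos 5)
Your proposal is correct and matches the paper's own proof: both compute the return as the sum of Delta-refined step rewards from step $k$ onward plus the success reward, and resolve it via the same telescoping cancellation, with the $k=K$ case handled separately. The extra bookkeeping you note (undiscounted return, reward attached at step boundaries, KL folded into the objective) is consistent with the paper's conventions and does not change the argument.
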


\begin{proof}
For $k\le K-1$,
\begin{align*}
\text{Return}&(q,p^{(k)})=\alpha\cdot\left(\sum_{i=k}^{K-2}r_{\text{process}}(q, p^{(i)})-r_{\text{process}}(q, p^{(i+1)})+r_{\text{process}}(q,p^{(K-1)})\right)+\text{Correct(q,s)}\\
&=\alpha\cdot\left(r_{\text{process}}(q, p^{(k)})-r_{\text{process}}(q, p^{(k+1)})+r_{\text{process}}(q, p^{(k+1)})-r_{\text{process}}(q, p^{(k+2)}) + \cdots \right. \\ 
& \left. -r_{\text{process}}(q, p^{(K-2)})+r_{\text{process}}(q, p^{(K-2)})-r_{\text{process}}(q, p^{(K-1)})+r_{\text{process}}(q,p^{(K-1)})\right)+\text{Correct(q,s)}\\
&=\alpha\cdot r_{\text{process}}(q,p^{(k)})+\text{Correct}(q,s)
\end{align*}

For $k=K$, the return is $\text{Correct}(q,s)$ clearly.

\end{proof}

This result indicates that when applying the Delta mechanism to the process rewards, the policy gradient for optimizing the policy $\pi$ would be,
\begin{align*}
\nabla_\pi \mathcal L_{\text{RL}}(\pi)=\sum_{k=1}^K\nabla_\pi\log\pi(s^{(k)}|q,p^{(k-1)})\cdot(\alpha\cdot r_{\text{process}}(q, p^{(k)})+\text{Correct}(q,s))
\end{align*}

\end{document}